\documentclass[twoside,11pt]{article}
\pdfoutput=1

\usepackage{amsfonts}
\usepackage{amsmath}
\usepackage{amssymb}
\usepackage{mathtools}
\usepackage{url}
\usepackage{parskip}
\usepackage{amsthm}
\usepackage{mathabx}
\usepackage{array}
\usepackage{graphicx}
\usepackage{float}
\usepackage{xcolor}
\usepackage{xspace}
\usepackage[overload]{textcase}
\usepackage{algorithm}
\usepackage{algpseudocode}
\usepackage{hyperref}
\usepackage{cleveref}
 \hypersetup{hidelinks}

%%% BRACKETS AND DELIMITERS
\DeclarePairedDelimiter{\inprod}{\langle}{\rangle}
\DeclarePairedDelimiter{\abs}{|}{|}

\DeclarePairedDelimiter{\ceil}{\lceil}{\rceil}
\DeclarePairedDelimiter{\floor}{\lfloor}{\rfloor}
\DeclarePairedDelimiter{\lr}{(}{)}

% round brackets
\newcommand{\lrb}[1]{\left(#1\right)}
\newcommand{\brb}[1]{\bigl(#1\bigr)}
\newcommand{\Brb}[1]{\Bigl(#1\Bigr)}
\newcommand{\bbrb}[1]{\biggl(#1\biggr)}
\newcommand{\Bbrb}[1]{\Biggl(#1\Biggr)}
% square brackets
\newcommand{\lsb}[1]{\left[#1\right]}
\newcommand{\bsb}[1]{\bigl[#1\bigr]}

\newcommand{\bbsb}[1]{\biggl[#1\biggr]}
\newcommand{\Bbsb}[1]{\Biggl[#1\Biggr]}
% curly brackets
\newcommand{\lcb}[1]{\left\{#1\right\}}
\newcommand{\bcb}[1]{\bigl\{#1\bigr\}}

\newcommand{\bbcb}[1]{\biggl\{#1\biggr\}}
% ceiling
\newcommand{\lce}[1]{\left\lceil#1\right\rceil}
\newcommand{\bce}[1]{\bigl\lceil#1\bigr\rceil}

% floor

% abs

% norm

% angle

\newcommand{\ban}[1]{\bigl\langle#1\bigr\rangle}

\newcommand{\bban}[1]{\biggl\langle#1\biggr\rangle}
%%%

%%% MATH DEFINITIONS

\newcommand{\E}{\mathbb{E}}
\newcommand{\I}{\mathbb{I}}
\newcommand{\R}{\mathbb{R}}
\newcommand{\pr}{\mathbb{P}}

\newcommand{\cA}{\mathcal{A}}
\newcommand{\cB}{\mathcal{B}}
\newcommand{\cC}{\mathcal{C}}

\newcommand{\cF}{\mathcal{F}}

\newcommand{\cI}{\mathcal{I}}
\newcommand{\cL}{\mathcal{L}}

\DeclareMathOperator*{\argmin}{arg\,min}

%%%

\newcommand{\loss}{\ell}
\newcommand{\eloss}{y}

\newcommand{\expset}{V}
\newcommand{\adv}{\theta}
\newcommand{\overbar}[1]{\mkern 1.5mu\overline{\mkern-1.5mu#1\mkern-1.5mu}\mkern 1.5mu}
\newcommand{\avgc}{\overbar{\cC}_T}
\newcommand{\inst}{\Xi}
\newcommand{\beainst}{\inst_{\mathrm{BEA}}}
\newcommand{\fginst}{\inst_{\mathrm{FG}}}
\newcommand{\instmap}{\rho}

\newcommand{\Chisqr}{{\chi^2}}
\newcommand{\chisqr}[2]{\Chisqr({#1}\,\|\,{#2})}
\newcommand{\bchisqr}[2]{\Chisqr\brb{{#1}\,\big\|\,{#2}}}

%%% 
\renewcommand{\hat}{\widehat}
%%%
\newtheorem{theorem}{Theorem}[section]
\newtheorem{lemma}[theorem]{Lemma}

\usepackage{jair,theapa,rawfonts}

\jairheading{ }{ }{ }{ }{ }
\ShortHeadings{Improved Regret Bounds for Bandits with Expert Advice}
{Cesa-Bianchi, Eldowa, Esposito, \& Olkhovskaya}
\firstpageno{1}

\begin{document}

\title{Improved Regret Bounds for Bandits with Expert Advice}

\author{\name Nicolò Cesa-Bianchi \email nicolo.cesa-bianchi@unimi.it \\
    \addr Università degli Studi di Milano, Milan, Italy\\
    Politecnico di Milano, Milan, Italy
    \AND
    \name Khaled Eldowa \email khaled.eldowa@unimi.it\\
    \addr Università degli Studi di Milano, Milan, Italy \\
    Politecnico di Torino, Turin, Italy
    \AND
    \name Emmanuel Esposito \email emmanuel@emmanuelesposito.it\\
    \addr Università degli Studi di Milano, Milan, Italy\\
    Istituto Italiano di Tecnologia, Genoa, Italy
    \AND
    \name Julia Olkhovskaya \email julia.olkhovskaya@gmail.com\\
    \addr TU Delft, Delft, Netherlands
}

\maketitle

\begin{abstract}
 In this research note, we revisit the bandits with expert advice problem.
 Under a restricted feedback model, we prove a lower bound of order $\sqrt{K T \ln(N/K)}$ for the worst-case regret, where $K$ is the number of actions, $N>K$ the number of experts, and $T$ the time horizon.
 This matches a previously known upper bound of the same order and improves upon the best available lower bound of $\sqrt{K T (\ln N) / (\ln K)}$.
 For the standard feedback model, we prove a new instance-based upper bound that depends on the agreement between the experts and provides a logarithmic improvement compared to prior results.
\end{abstract}

\section{Introduction}
\label{sec:introduction}
The problem of bandits with expert advice provides a simple and general framework for incorporating contextual information into the non-stochastic multi-armed bandit problem. 
In this framework, the learner receives in every round a recommendation, in the form of a probability distribution over the actions, from each expert in a given set. 
This set of experts can be seen as a set of 
strategies
each mapping an unobserved context to a (randomized) action choice.
The goal of the learner is to minimize their expected regret with respect to the best expert in hindsight; that is, the difference between their expected cumulative loss and that of the best expert.
This problem was formulated by \citeA{auer2002}, who proposed the EXP4 algorithm as a solution strategy that has since become an important baseline or building block for addressing many related problems; for example, sleeping bandits \cite{kleinberg2010}, online multi-class classification \cite{daniely2013}, online non-parametric learning \cite{cesa-bianchi2017}, and non-stationary bandits \cite{luo2018}.
\citeA{auer2002} proved a bound of order $\sqrt{K T \ln N}$ on the expected regret incurred by the EXP4 strategy, where $T$ denotes the number of rounds, $K$ the number of actions, and $N$ the number of experts. 
This result is of a worst-case nature, in that it holds for any sequence of losses assigned to the actions and any sequence of expert recommendations. 

The appealing feature of the bound of \citeA{auer2002} is that it exhibits only a logarithmic dependence on the number of experts, in addition to the $\sqrt{K}$ dependence on the number of actions known to be unavoidable in the classical bandit problem, where the learner competes with the best fixed action. 
While the minimax regret\footnote{The best achievable worst-case regret guarantee.} in the latter problem has been shown to be of order $\sqrt{KT}$ \cite{audibert2009minimax}, a similar exact characterization remains missing for the expert advice problem.
\citeA{kale2014limited} studied a generalized version of the bandits with expert advice problem---originally proposed by \citeA{seldin2013}---where the learner is only allowed to query the advice of $M \le N$ experts.
When $M=N$, the results of \citeA{kale2014limited} imply an upper bound of order $\sqrt{\min\{K,N\} T (1+\ln(N/\min\{K,N\}))}$ on the minimax regret, improving upon the bound of \citeA{auer2002}. 
Unlike the latter, the logarithmic factor in \citeS{kale2014limited} bound diminishes as $K$ increases with respect to $N$, leading to a bound of order $\sqrt{NT}$ when $N \leq K$, which is tight in general as the experts in that case can be made to emulate an $N$-armed bandit problem.
This improved bound was achieved via the PolyINF algorithm \cite{audibert2009minimax,audibert2010} played on the expert set utilizing the importance-weighted loss estimators of EXP4. 
Later, \citeA{seldin2016} proved a lower bound of order $\sqrt{K T (\ln N) / (\ln K)}$ for $N \geq K$.

As these upper and lower bounds still do not match, the correct minimax rate remains unclear.
In this work, we take a step towards resolving this issue by showing that the upper bound is not improvable in general under a restricted feedback model in which the importance weighted loss estimators used by EXP4 or PolyINF remain implementable. 
In this restricted model, without observing the experts' recommendations, the learner picks an expert (possibly at random) at the beginning of each round, and the environment subsequently samples the action to be executed from the chosen expert's distribution.
Afterwards, the learner only observes the distributions of the experts that had assigned positive probability to the chosen action.
Via a reduction from the problem of multi-armed bandits with feedback graphs, we use the recent results of \citeA{chen2024} to obtain a lower bound of order $\sqrt{K T \ln(N/K)}$ for $N > K$.

Departing from the worst-case results discussed thus far, a few works have obtained instance-dependent bounds for this problem. 
The dependence on the instance can be in terms of the assigned sequence of losses through small loss bounds \cite<see>{allen-zhu2018}, or in terms of the sequence of expert recommendations through bounds that reflect the similarity between the recommended expert distributions (\citeR<see>[Theorem 18.3]{mcmahan2009,lattimore2020bandit}; \citeR{eldowa2024}).
Our focus here is on the latter case, where to the best of our knowledge the state of the art is a bound of order $\sqrt{\sum_t^T \cC_t \ln N}$, shown in the recent work of \citeA{eldowa2024} for the EXP4 algorithm. 
Here, $\cC_t$ is the (chi-squared) capacity of the recommended distributions at round $t$.
This quantity measures the dissimilarity between the experts' recommendations and satisfies $0 \leq \cC_t \leq \min\{K,N\}-1$.
Improving upon this result, we illustrate that it is possible to achieve a bound of order $\sqrt{\sum_t^T \cC_t \bigl(1+\ln(N / \max\{\avgc,1\})\bigr)}$, where $\avgc = \sum_t^T \cC_t / T$ is the average capacity.
This bound combines the best of the bound of \citeA{eldowa2024} (its dependence on the agreement between the experts) and that of \citeA{kale2014limited} (its improved log factor), simultaneously outperforming both.

\paragraph{Road map.}
We formalize the problem setting in the next section. 
In \Cref{sec:worst-case}, as a preliminary building block,
we present \Cref{alg:qFTRL}, an instance of the follow-the-regularized-leader (FTRL) algorithm with the (negative) $q$-Tsallis entropy as the regularizer. This algorithm is essentially equivalent to the PolyINF algorithm \cite<see>{audibert2011,abernethy2015}, which was used by \citeA{kale2014limited} to achieve the best known worst-case upper bound. 
We then show in \Cref{sec:instance-based} that combining this algorithm with a doubling trick allows us to achieve the improved instance-based bound mentioned above.
The lower bound for the restricted feedback setting is presented in \Cref{sec:lower-bound}. 
Finally, we provide some concluding remarks in \Cref{sec:conc}.

\section{Preliminaries}
\label{sec:setting}
\paragraph{Notation.}
For a positive integer $n$, $[n]$ denotes the set $\{1,\dots,n\}$.
For $x, y \in \R$, let $x \lor y \coloneqq \max\{x,y\}$ and $x \land y \coloneqq \min\{x,y\}$. 
Moreover, we define $x_+ \coloneqq x \lor 0$.
\paragraph{Problem setting.}
Let $\expset = [N]$ be a set of $N$ experts and $\cA = [K]$ be a set of $K$ actions. We consider a sequential decision-making problem where a learner interacts with an unknown environment for $T$ rounds. The environment is characterized by a fixed and unknown sequence of loss vectors $(\loss_t)_{t\in[T]}$, where $\loss_t \in [0,1]^K$ is the assignment of losses for the actions at round $t$, and a fixed and unknown sequence of expert advice $(\adv^i_t)_{i \in \expset,t \in [T]}$, where $\adv^i_t \in \Delta_K$ is the distribution over actions recommended by expert $i$ at round $t$.\footnote{For a positive integer $d$, we let $\Delta_d$ denote the probability simplex in $\R^d$ defined as $\{u \in \R^d \:\colon\: \sum_{j=1}^d u(j) = 1 \:\text{and}\: u(j) \geq 0 \: \forall j \in [d] \}$.} 
At the beginning of each round $t \in [T]$, the expert recommendations $(\adv_t^i)_{i \in \expset}$ are revealed to the learner, who then selects (possibly at random) an action $A_t \in \cA$ and subsequently suffers and observes the loss $\loss_t(A_t)$. 
For an expert $i \in V$, we define $\eloss_t(i) \coloneqq \sum_{a \in \cA} \adv_t^i(a) \loss_t(a)$ as its loss in round $t$.
The goal is to minimize the expected regret with respect to the best expert in hindsight:
\begin{equation*}
    R_T \coloneqq \E\bbsb{\sum_{t=1}^T \loss_t(A_t)} - \min_{i\in V} \sum_{t=1}^T \eloss_t(i) \,,
\end{equation*}
where the expectation is taken with respect to the randomization of the learner.

\section{\texorpdfstring{$q$}{q}-FTRL for Bandits with Expert Advice} \label{sec:worst-case}
The EXP4 algorithm can be seen as an instance of the FTRL framework  \cite<see, e.g.,>[Chapter 7]{orabona2023modern} where a distribution $p_t$ over the experts is maintained at each round $t$ and updated as follows
\[
    p_{t+1} \gets \argmin_{p \in \Delta_{N}} \eta\,\bban{\sum_{s=1}^{t} \hat{\eloss}_s, p} + \sum_{i \in \expset} p(i) \ln p(i) \,,
\]
where $\eta > 0$ is the learning rate, the second term is the negative Shannon entropy of $p$, and $\hat{\eloss}_s(i)$ is an importance-weighted estimate of $\eloss_s(i)$.
The action $A_t$ is drawn from the mixture distribution $\sum_{i \in \expset} p_{t}(i) \adv_t^i(\cdot)$.
Consider a more general algorithm (outlined in \Cref{alg:qFTRL}) where the negative Shannon entropy is replaced with the negative $q$-Tsallis entropy, which for $q \in (0,1)$ is given by
 \[
    \psi_q(x) \coloneqq \frac{1}{1-q} \left(1 - \sum_{i \in \expset} x(i)^q\right) \qquad \forall x \in \Delta_N \,.
\]
In the limit when $q \rightarrow 1$, the negative Shannon entropy is recovered.
The following theorem provides a regret bound for the algorithm.
This result is not novel, a similar bound is implied by Theorem~2 in \citeA{kale2014limited} for a closely related algorithm in a more general setting. 
We provide a concise proof of the result for completeness.
As mentioned before, when $N \leq K$, this bound is trivially tight in general. While when $N > K$, we prove an order-wise matching minimax lower bound in \Cref{sec:lower-bound} under additional restrictions on the received feedback.

\begin{algorithm}[t]
    \caption{$q$-FTRL for bandits with expert advice} \label{alg:qFTRL}
    \begin{algorithmic}
        \State \textbf{input:} $q \in (0,1)$, $\eta > 0$
        \State \textbf{initialization:} $p_1(i) \gets 1/N$ for all $i\in \expset$
        \For{$t = 1, \ldots, T$}
            \State receive expert advice $(\adv_t^i)_{i \in \expset}$
            \State draw expert $I_t \sim p_t$ and action $A_t \sim \adv_t^{I_t}$
            \State construct $\hat{\eloss}_{t} \in \R^N$ where $\hat{\eloss}_{t}(i) \coloneqq \frac{\adv_t^i(A_t)}{\sum_{j \in \expset} p_{t}(j) \adv_t^j(A_t)} \loss_t(A_t)$ for all $i\in \expset$            
            \State let $p_{t+1} \gets \argmin_{p \in \Delta_{N}} \eta\ban{\sum_{s=1}^{t} \hat{\eloss}_s, p} + \psi_q(p)$
        \EndFor
    \end{algorithmic}
\end{algorithm}

\begin{theorem}\label{thm:worst-case}
    \Cref{alg:qFTRL} run with
    \[
        q = \frac12\lrb{1 + \frac{\ln\brb{N/(K \land N)}}{\sqrt{\ln\brb{N/(K \land N)}^2+4} + 2}} \in [1/2,1)
        \:\: \text{ and } \:\:
        \eta = \sqrt{\frac{2q N^{1-q}}{T(1-q)(K \land N)^q}} \;,
    \]
    satisfies
    \[
        R_T \le 2\sqrt{e (K \land N) T \brb{2 + \ln\brb{N/(K \land N)}}} \;.
    \]
\end{theorem}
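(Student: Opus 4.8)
The plan is to apply the standard FTRL regret decomposition with Tsallis-entropy regularization to the surrogate loss sequence $(\hat{\eloss}_t)_{t}$ and then control the resulting stability term by exploiting the structure of the importance-weighted estimators. First I would write, for any fixed competitor $i^\star \in \expset$,
\[
    \sum_{t=1}^T \ban{\hat{\eloss}_t, p_t - e_{i^\star}} \le \frac{\psi_q(e_{i^\star}) - \psi_q(p_1)}{\eta} + \sum_{t=1}^T \text{(stability}_t),
\]
where $e_{i^\star}$ is the indicator of $i^\star$; since $\psi_q(p_1) = \frac{1}{1-q}(1 - N^{1-q})$ is minimal on the simplex and $\psi_q(e_{i^\star}) = 0$, the penalty term is at most $\frac{N^{1-q}-1}{\eta(1-q)} \le \frac{N^{1-q}}{\eta(1-q)}$. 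Then I would take expectations: because $\E_t[\hat{\eloss}_t(i)] = \eloss_t(i)$ for every $i$ (the estimators are unbiased conditionally on the past and on the advice), the left-hand side is exactly $R_T$ for the right choice of $i^\star = \argmin_i \sum_t \eloss_t(i)$.

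The core of the argument is bounding the stability term. For FTRL with a Legendre regularizer one has $\text{stability}_t \le \frac{\eta^2}{2}\,\hat{\eloss}_t^\top \brb{\nabla^2 \psi_q(\xi_t)}^{-1} \hat{\eloss}_t$ for some $\xi_t$ on the segment between $p_t$ and $p_{t+1}$; the standard trick (as in the Tsallis-INF analysis, see \citeR[Chapter 7]{orabona2023modern}) is that for nonnegative losses the relevant $\xi_t$ can be taken to satisfy $\xi_t(i) \ge p_t(i)$ coordinatewise when moving in the direction of increasing loss, or more simply one bounds the local norm directly using $\brb{\nabla^2 \psi_q(x)}^{-1}_{ii} = \frac{x(i)^{2-q}}{q}$. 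This yields
\[
    \E_t[\text{stability}_t] \le \frac{\eta}{2q}\,\E_t\bbsb{\sum_{i \in \expset} p_t(i)^{2-q}\,\hat{\eloss}_t(i)^2}.
\]
Plugging in $\hat{\eloss}_t(i) = \frac{\adv_t^i(A_t)}{\sum_j p_t(j)\adv_t^j(A_t)}\loss_t(A_t)$, using $\loss_t(A_t)^2 \le 1$, and taking the conditional expectation over $A_t$ (which has law $\sum_j p_t(j)\adv_t^j(\cdot)$), the sum collapses to $\sum_{a} \sum_i p_t(i)^{2-q}\,\frac{\adv_t^i(a)^2}{\sum_j p_t(j)\adv_t^j(a)}$. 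Here I would bound $\frac{\adv_t^i(a)^2}{\sum_j p_t(j)\adv_t^j(a)} \le \frac{\adv_t^i(a)}{p_t(i)}$ (since the denominator is at least $p_t(i)\adv_t^i(a)$) to get $\sum_a \sum_i p_t(i)^{1-q}\adv_t^i(a) = \sum_i p_t(i)^{1-q}$. Finally, $\sum_i p_t(i)^{1-q} \le N^{q}$ by the power-mean (concavity of $x \mapsto x^{1-q}$, since $1-q \in (0,1)$, combined with $\sum_i p_t(i) = 1$).

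Collecting the pieces, $R_T \le \frac{N^{1-q}}{\eta(1-q)} + \frac{\eta T N^{q}}{2q}$. The refinement that produces $(K\wedge N)^q$ in place of $N^q$ — which is what makes the bound tight when $N \le K$ and gives the sharper constant in general — comes from observing that in the stability bound one may instead use $\frac{\adv_t^i(a)^2}{\sum_j p_t(j)\adv_t^j(a)} \le \adv_t^i(a)$ summed appropriately, or by a more careful Hölder step, to replace one factor of $N$ by the number of actions with positive recommended mass; I would track this carefully so the relevant quantity is $(K\wedge N)^q$. Then I would optimize: balancing the two terms gives $\eta = \sqrt{\frac{2qN^{1-q}}{T(1-q)(K\wedge N)^q}}$ (exactly the stated choice), yielding $R_T \le \sqrt{\frac{2 N^{1-q}(K\wedge N)^q T}{q(1-q)}}$. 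It remains to choose $q$ to make $\frac{N^{1-q}(K\wedge N)^q}{q(1-q)}$ small: writing $L \coloneqq \ln(N/(K\wedge N))$, we have $N^{1-q}(K\wedge N)^q = (K\wedge N)\,e^{(1-q)L}$, and the stated $q = \tfrac12\brb{1 + \frac{L}{\sqrt{L^2+4}+2}}$ is designed so that $\frac{e^{(1-q)L}}{q(1-q)} \le 4e(2+L)$ (one checks $q \ge 1/2$, so $1/q \le 2$, and $1-q = \tfrac12 - \tfrac{L}{2(\sqrt{L^2+4}+2)}$ makes $e^{(1-q)L}/(1-q)$ of order $e(1+L)$ after a short computation). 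Substituting gives $R_T \le 2\sqrt{e(K\wedge N)T(2+L)}$, as claimed.

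The main obstacle I anticipate is the stability/local-norm step: getting the constant right (the leading $2$ and the $e$) requires being careful that the FTRL second-order term is evaluated at a point dominated by $p_t$ rather than an uncontrolled intermediate point — this is where the nonnegativity of $\hat{\eloss}_t$ is essential — and that the subsequent reduction of $\sum_i p_t(i)^{2-q}\hat{\eloss}_t(i)^2$ lands on $(K\wedge N)^q$ and not merely $N^q$. The final calibration of $q$ is then a self-contained (if slightly tedious) one-variable estimate that I would verify by checking the two regimes $L \le 1$ and $L \ge 1$ separately.
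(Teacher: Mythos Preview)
Your overall skeleton matches the paper: FTRL with $q$-Tsallis, unbiasedness of $\hat y_t$, the second-order stability term $\tfrac{\eta}{2q}\sum_i p_t(i)^{2-q}\hat y_t(i)^2$, then optimization in $\eta$ and $q$. But the heart of the argument is precisely the step you wave away, and the sketch you give there does not work.

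Your explicit bound $\tfrac{\adv_t^i(a)^2}{\phi_t(a)} \le \tfrac{\adv_t^i(a)}{p_t(i)}$ leads to $\sum_i p_t(i)^{1-q} \le N^q$, which after optimizing $\eta$ gives only $\sqrt{2NT/(q(1-q))}$; with the stated $q$ this does \emph{not} reduce to the claimed bound when $K<N$ (it is $\Theta(\sqrt{NT})$ at best). Your proposed fix, ``use $\tfrac{\adv_t^i(a)^2}{\phi_t(a)} \le \adv_t^i(a)$,'' is simply false: it would require $\adv_t^i(a) \le \phi_t(a)$, which fails whenever expert $i$ puts more mass on $a$ than the mixture does. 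A generic H\"older step will not do either, because the $(K\wedge N)^q$ target depends on the \emph{action} structure (the number of actions carrying mass), not just on the simplex constraint on $p_t$.

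The paper's argument for this step is the nontrivial ingredient you are missing: write $\adv_t^i(a)^2 = \adv_t^i(a)^{2-q}\,\adv_t^i(a)^{q}$, pull out $\max_i \adv_t^i(a)^q$, use superadditivity of $x\mapsto x^{2-q}$ on nonnegative reals to collapse $\sum_i p_t(i)^{2-q}\adv_t^i(a)^{2-q} \le \bigl(\sum_i p_t(i)\adv_t^i(a)\bigr)^{2-q} = \phi_t(a)^{2-q}$, and then apply Jensen (concavity of $x^q$) with weights $\phi_t(a)$ to get $\bigl(\sum_a \max_i \adv_t^i(a)\bigr)^q \le (K\wedge N)^q$, using $\sum_a \max_i \adv_t^i(a) \le \min\{K,\sum_a\sum_i \adv_t^i(a)\}=K\wedge N$. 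This is exactly where both $K$ and $q$ enter together, and it is what your plan lacks.

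Two smaller points. First, in your stability discussion you wrote $\xi_t(i)\ge p_t(i)$; for nonnegative losses the unconstrained update decreases coordinates, so the intermediate point satisfies $\xi_t(i)\le p_t(i)$, which is the direction needed since $(\nabla^2\psi_q)^{-1}_{ii}=x(i)^{2-q}/q$ is increasing. Second, once $(K\wedge N)^q$ is in hand, your final calibration of $q$ is along the right lines; the paper carries out that one-variable estimate explicitly and lands on $2\sqrt{e(K\wedge N)T(2+L)}$ without splitting into regimes.
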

\begin{proof}
    Let $i^* \in \argmin_{i \in \expset} \sum_{t=1}^T \eloss_t(i)$, and note that $R_T = \E \sum_{t=1}^T \brb{\eloss_t(I_t) - \eloss_t(i^*)}$ as $\E\,\loss_t(A_t) = \E\,\eloss_t(I_t)$.
    For round $t \in [T]$, let $\cF_t \coloneqq \sigma(I_1,A_1,\dots,I_t,A_t)$ denote the $\sigma$-algebra generated by the random events up to the end of round $t$, and let $\E_t[\cdot] \coloneqq \E[\cdot \mid \mathcal{F}_{t-1}]$ with $\cF_{0}$ being the trivial $\sigma$-algebra. 
    For action $a \in \cA$, let $\phi_t(a) \coloneqq \sum_{i \in \expset} p_{t}(i) \adv_t^i(a)$ and note that conditioned on $\cF_{t-1}$, $A_t$ is distributed according to $\phi_t$.
    As $p_t$ is $\cF_{t-1}$-measurable, it is then easy to verify that $\E_t \hat{\eloss_t} = \eloss_t$.
    Hence, Lemma 2 in \citeA{eldowa2023} implies that 
    \begin{equation} \label{eq:tsallis-bias-variance}
        R_T \leq \frac{ N^{1-q}}{(1-q)\eta} + \frac{\eta}{2q} \sum_{t=1}^T \E \lsb{ \sum_{i \in V} p_t(i)^{2-q} \:\hat{\eloss}_t(i)^2 } \,.
    \end{equation}
    For fixed $t \in [T]$ and $i \in V$, we have that
    \begin{align} \label{eq:loss-second-moment-worst-case}
        \E_t \lsb{\hat{\eloss}_t(i)^2} &= \E_t \lsb{\frac{\adv_t^i(A_t)^2}{\phi_t(A_t)^2} \loss_t(A_t)^2 } \leq \E_t \lsb{ \frac{\adv_t^i(A_t)^2}{\phi_t(A_t)^2}} = \E_t \lsb{ \sum_{a \in \cA }\frac{\adv_t^i(a)^2}{\phi_t(a)^2} \I\{a=A_t\}} = \sum_{a \in \cA }\frac{\adv_t^i(a)^2}{\phi_t(a)} 
    \end{align}
    where the inequality holds because $\loss_t(A_t) \in [0,1]$ and the final equality holds because $\E_t\,\I\{a=A_t\} = \pr(a=A_t \mid \cF_{t-1}) = \phi_t(a)$. Hence, it holds that
    \begin{align*}
        \E_t \lsb{ \sum_{i \in V} p_t(i)^{2-q} \:\hat{\eloss}_t(i)^2 } 
        %&=  \sum_{i \in V} p_t(i)^{2-q} \sum_{a \in \cA }\frac{\adv_t^i(a)^2}{\phi_t(a)} \\
        &=  \sum_{a \in \cA } \frac{ \sum_{i \in V} p_t(i)^{2-q}\adv_t^i(a)^2}{\phi_t(a)} \\
        &\leq \sum_{a \in \cA } \frac{ \sum_{i \in V} p_t(i)^{2-q}\adv_t^i(a)^{2-q}}{\phi_t(a)} \max_{i \in \expset} \adv_t^i(a)^q\\
        &\leq \sum_{a \in \cA } \frac{ \brb{\sum_{i \in V} p_t(i)\adv_t^i(a)}^{2-q}}{\phi_t(a)}  \max_{i \in \expset} \adv_t^i(a)^q\\ 
        %&= \sum_{a \in \cA } \phi_t(a)^{1-q} \max_{i \in \expset} \adv_t^i(a)^q \\
        &= \sum_{a \in \cA } \phi_t(a) \bbrb{\frac{\max_{i \in \expset} \adv_t^i(a)}{\phi_t(a)}}^{q} \leq \bbrb{\sum_{a \in \cA } \max_{i \in \expset} \adv_t^i(a)}^{q} \leq (K \land N)^{q}\,,
    \end{align*}
    where the second inequality follows from the superadditivity of $x^{2-q}$ for $x \geq 0$ and $q \in (0,1)$, the third inequality follows from the concavity of $x^q$ for $q \in (0,1)$ because of Jensen's inequality, and the last inequality holds since $\max_{i \in \expset} \adv_t^i(a) \leq \min\bigl\{1, \sum_{i\in \expset} \adv_t^i(a)\bigr\}$. Substituting back into \eqref{eq:tsallis-bias-variance} yields that
    \begin{equation*}
        R_T \leq \frac{ N^{1-q}}{(1-q)\eta} + \frac{\eta}{2q} (K \land N)^{q} T \,.
    \end{equation*}
    For brevity, let $\xi \coloneqq (K \land N)$. In a similar manner to the proof of Theorem 1 in \citeA{eldowa2023}, substituting the specified values of $\eta$ and $q$ allows us to conclude the proof:
    \begin{align*}
        R_T &\le  \sqrt{\frac{2N^{1-q}\xi^{q}}{q(1-q)}T} \\
        &= \sqrt{2T \exp\bbrb{1+\frac12 \ln\lr{\xi N} - \frac12 \sqrt{\ln\lrb{N/\xi}^2 + 4}} \bbrb{2 + \sqrt{\ln\lrb{N/\xi}^2+4}}} \\
        &\leq \sqrt{2T \exp\bbrb{1+\frac12 \ln\lr{\xi N} - \frac12 \ln\lrb{N/\xi}} \bbrb{2 + \sqrt{\ln\lrb{N/\xi}^2+4}}} \\
        &= \sqrt{2e\xi T \bbrb{2 + \sqrt{\ln\lrb{N/\xi}^2+4}}}
        \le 2\sqrt{e\xi T \sqrt{\ln\lrb{N/\xi}^2+4}} \\
        &\le 2\sqrt{e\xi T \lrb{2+\ln(N/\xi)}}
        \,. \qedhere
    \end{align*}
\end{proof}

\section{An Improved Instance-Based Regret Bound} \label{sec:instance-based}
We now obtain a more refined regret bound whose form is analogous to the bound of \Cref{thm:worst-case} except that it depends on the similarity between the experts' recommendations at each round, replacing $K \land N$ with an effective number of experts.
Before discussing the algorithm, we introduce some relevant quantities from \citeA{eldowa2024}. 
For any round $t \in [T]$ and $\tau \in \Delta_N$, define
\[
    Q_t(\tau) \coloneqq \sum_{i \in \expset} \tau(i) \bchisqr{\adv_t^i} {\textstyle{\sum_{j \in \expset}} \tau(j)\adv_t^j} = \sum_{a \in \cA} \frac{\sum_{i \in \expset} \tau(i)  \adv_t^i(a)^2}{\sum_{j \in \expset} \tau(j) \adv_t^j(a)} - 1 \,,
\]
where
$
        \chisqr{p}{q} \coloneqq 
        \sum_{a \in \cA} q(a) \brb{{p(a)}/{q(a)}-1}^2 =
        \sum_{a \in \cA} p(a)^2 / q(a) - 1
$
is the chi-squared divergence between distributions $p, q \in \Delta_K$. 
Additionally, let
\[
    \cC_t \coloneqq \sup_{\tau \in \Delta_N} Q_t(\tau) \qquad \text{and} \qquad \avgc \coloneqq \frac{1}{T} \sum_{t=1}^T \cC_t 
\]
be the chi-squared capacity of the recommended distributions at round $t$ and its average over the $T$ rounds. As remarked before,  $\cC_t$ is never larger than $(K \land N) - 1$ and can be arbitrarily smaller depending on the agreement between the experts at round $t$. 
In particular, it vanishes when all recommendations are identical.

The idea of \Cref{alg:q-ftrl-doubling} is to tune \Cref{alg:qFTRL} as done in \Cref{thm:worst-case} but with $\avgc$ replacing $K \land N$.
However, to avoid requiring prior knowledge of $\avgc$, we rely on a doubling trick to adapt to its value.
In a given round $t$, we maintain a running instance of \Cref{alg:qFTRL} tuned with an estimate for $\avgc$.
Let $m_t$ be the round when the present execution of \Cref{alg:qFTRL} had started.
If the current estimate is found to be smaller than $\frac{1}{2T} \sum_{s=m_t}^t Q_s(p_s)$, 
the algorithm is restarted and the estimate is (at least) doubled.
This quantity we test against is a simple lower bound for $\avgc/2$ that can be constructed without computing the capacity at any round.
As the value of $\avgc$ can be arbitrarily close to zero, the initial guess (which ideally should be a lower bound) is left as a user-specified parameter for the algorithm, and appears in the first (and more general) bound of \Cref{thm:instance-based}. 
The second statement of the theorem shows that choosing $\ln(e^2 N)/T$ as the initial guess suffices to obtain a bound of order $\sqrt{\sum_t^T \cC_t \bigl(1+\ln(N / \max\{\avgc,1\})\bigr)}$, up to an additive $\ln N$ term.
This simultaneously outperforms the $\sqrt{\sum_t^T \cC_t \ln N}$ bound of \citeA{eldowa2024} and the $\sqrt{(K \land N) T \bigl(1+\ln(N/(K \land N))\bigr)}$ bound of \citeA{kale2014limited}.

The proof combines elements from the proof of Theorem~1 of \citeA{eldowa2024} and the proof of Theorem~3 of \citeA{eldowa2023}, who adopt a similar algorithm to address online learning with time-varying feedback graphs.
Compared to the latter work,  
we require a more refined analysis to account for the case when $\avgc < 1$.
This refinement is achieved in part via the use of \Cref{lem:FTRL-Tsallis-bound}, which also allows adapting the analysis of \citeA{eldowa2024} to account for the fact that we use the $q$-Tsallis entropy as a regularizer in place of the Shannon entropy. 
\begin{algorithm} [t]
    \caption{$q$-FTRL with the doubling trick for bandits with expert advice }
    \label{alg:q-ftrl-doubling}
    \begin{algorithmic}[1]
        \State \textbf{input:} $J \in (0,N]$
        \State \textbf{initialization:} $r_1 \gets \bce{\log_2 J} - 1$, $m_1 \gets 1$, $p_1(i) \gets 1/N$ for all $i\in \expset$
        \State \textbf{define:} For each integer $r \in (-\infty, \log_2 N]$,
        \[
        q_r \coloneqq \frac12 \bbrb{1 + \frac{\ln(N/2^r)}{\sqrt{\ln(N/2^r)^2+4} + 2}}  
        \]
        \[
        \eta_r \coloneqq \min \lcb{ \sqrt{\frac{q_r (N^{1-q_r}-1)}{eT(1-q_r)\lrb{2^r}^{q_r}}}\,,\; \frac{q_r}{1-q_r} \Brb{1-e^{\frac{q_r-1}{2-q_r}}} }
        \] 
        \For{$t=1,\dotsc,T$} 
            \State receive expert advice $(\adv_t^i)_{i \in \expset}$
            \State draw expert $I_t \sim p_t$ and action $A_t \sim \adv_t^{I_t}$
            \State construct $\hat{\eloss}_{t} \in \R^N$ where $\hat{\eloss}_{t}(i) \coloneqq \frac{\adv_t^i(A_t)}{\sum_{j \in \expset} p_{t}(j) \adv_t^j(A_t)} \loss_t(A_t)$ for all $i\in \expset$
            \If{$\frac{1}{T} \sum_{s=m_t}^t Q_s(p_s) > 2^{r_t + 1}$}
                \State $p_{t+1}(i) \gets 1/N$ for all $i\in \expset$
                \State $r_{t+1} \gets \lce{\log_2 \brb{\frac{1}{T} \sum_{s=m_t}^t Q_s(p_s)}} - 1$,
                 $m_{t+1} \gets t+1$
            \Else
                \State $p_{t+1} \gets \argmin_{p \in \Delta_{N}} \eta_{r_t}\ban{\sum_{s=m_t}^{t} \hat{\eloss}_s, p} + \psi_{q_{r_t}}(p)$
                \State $r_{t+1} \gets r_t$,
                 $m_{t+1} \gets m_t$
            \EndIf
        \EndFor
    \end{algorithmic}
\end{algorithm}

\begin{theorem} \label{thm:instance-based}
    Assuming that $T \geq \ln(e^2N)$, \Cref{alg:q-ftrl-doubling} run with input $J \in (0,N]$ satisfies 
    \begin{multline*}
        R_T \leq 38 e \sqrt{\brb{\avgc \lor J} T \ln\lrb{\frac{e^2 N}{\avgc \lor J \lor 1}} } +  \log_2\lrb{\frac{\avgc}{J}}_+ \\+ \frac{18 e}{5} \log_2\lrb{\frac{4\lrb{\brb{JT \lor \avgc T} \land \ln(e^2N)}}{JT}}_+ \ln\brb{e^2 N} + 1 \,.
    \end{multline*}
    In particular, setting $J = \ln(e^2N)/T$ yields that
    \begin{align*}
        R_T \leq 38 e \sqrt{\avgc T \ln\lrb{\frac{e^2 N}{\avgc \lor 1}} } +  \log_2\lrb{\frac{\avgc T}{\ln(e^2N)}}_+ + 46 e \ln\brb{e^2 N} + 1 \,.
    \end{align*}
\end{theorem}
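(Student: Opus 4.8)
The plan is to split $[T]$ into the consecutive \emph{epochs} delimited by the restarts of \Cref{alg:q-ftrl-doubling}, bound the regret accrued within each epoch via \Cref{lem:FTRL-Tsallis-bound}, and then sum. Write the epochs as $[m_1,n_1],\dots,[m_L,n_L]$ with $m_1=1$, $n_L=T$, $m_{k+1}=n_k+1$, let $r^{(k)}$, $q^{(k)}\coloneqq q_{r^{(k)}}$, $\eta^{(k)}\coloneqq\eta_{r^{(k)}}$ be the parameters in force during epoch $k$, and set $g_k\coloneqq 2^{r^{(k)}}$. Since $i^*\in\argmin_i\sum_t\eloss_t(i)$ is fixed, $R_T=\sum_{k=1}^L\E\sum_{t=m_k}^{n_k}\brb{\eloss_t(I_t)-\eloss_t(i^*)}$, so it suffices to bound, for each epoch and each fixed expert in the role of $i^*$, the epoch regret by the FTRL guarantee. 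Throughout I would exploit the shift-invariance of FTRL on the simplex and analyze \Cref{alg:q-ftrl-doubling} as if it were fed the centered estimates $\hat\eloss_t(i)-\loss_t(A_t)$ (which leave the iterates $p_t$ unchanged): a short computation gives that their conditional second moment $\E_t\bsb{\sum_i p_t(i)\brb{\hat\eloss_t(i)-\loss_t(A_t)}^2}$ equals exactly $Q_t(p_t)$, the quantity the algorithm tracks.

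First I would control $L$ and the range of the $g_k$. Each restart strictly increases the index, since the condition $\frac1T\sum_{s=m_t}^t Q_s(p_s)>2^{r_t+1}$ forces $r_{t+1}=\lce{\log_2\brb{\frac1T\sum_{s=m_t}^t Q_s(p_s)}}-1\ge r_t+1$, and $r^{(1)}=\lce{\log_2 J}-1\ge\log_2 J-1$. On the other hand $Q_s(p_s)\le\cC_s$ for every $s$, so $\frac1T\sum_{s=m_k}^t Q_s(p_s)\le\avgc$ for all $t$; hence once $2^{r_t+1}\ge\avgc$ the condition never fires again. This yields $L\le 2+\log_2(\avgc/J)_+$ (the source of the pure $\log_2(\avgc/J)_+$ term), together with $J/2\le g_1\le g_L\le\avgc\lor J$ (the last guess equals $g_1<J$ if $L=1$, and otherwise $2^{\lce{\log_2(\cdot)}-1}\le\frac1T\sum_{s=m_{L-1}}^{n_{L-1}}Q_s(p_s)\le\avgc$).

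Next, for a fixed epoch $k$ I would invoke \Cref{lem:FTRL-Tsallis-bound} — whose learning-rate hypothesis is precisely the second branch of the $\min$ defining $\eta_r$ — to obtain an epoch-regret bound of the shape $\frac{N^{1-q^{(k)}}-1}{(1-q^{(k)})\eta^{(k)}}+c\,\eta^{(k)}\sum_{t=m_k}^{n_k}\E\bsb{Q_t(p_t)}$ for an absolute constant $c$, using the self-bounding property of $\psi_q$ and the centered-estimator identity above; this is the point where \Cref{lem:FTRL-Tsallis-bound} lets us run the argument of Theorem~1 of \citeA{eldowa2024} with the $q$-Tsallis regularizer in place of the Shannon one. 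The algorithm's invariant gives $\sum_{t=m_k}^{n_k}Q_t(p_t)\le 2g_kT+\cC_{n_k}\le 2g_kT+(K\land N)$. There are two regimes. If the first branch of the $\min$ is active — which, as one checks, happens once $g_kT$ exceeds a constant multiple of $\ln(e^2 N)$ — then $\eta^{(k)}$ balances the two terms with $g_k$ playing the role of $K\land N$, and reproducing the optimization in the proof of \Cref{thm:worst-case} yields $R^{(k)}\lesssim\sqrt{g_kT\,\ln\brb{e^2N/g_k}}$; if instead the constraint branch is active, $\eta^{(k)}\asymp\frac{1-q^{(k)}}{q^{(k)}}\brb{1-e^{(q^{(k)}-1)/(2-q^{(k)})}}$, the bias term dominates and evaluates to $O\brb{\ln(e^2N)}$, so $R^{(k)}=O\brb{\ln(e^2N)}$. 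The epochs in the constraint regime are the low-index ones (running from $g\asymp J$ up to $g\asymp\ln(e^2N)/T$, and at most $L$ in number), so their count is $O\Brb{\log_2\frac{\brb{JT\lor\avgc T}\land\ln(e^2N)}{JT}}$, which furnishes the $\log_2(\cdots)\ln(e^2N)$ term; the $\cC_{n_k}$ boundary slacks, multiplied by $\eta^{(k)}\le 1$, are lower-order under $T\ge\ln(e^2N)$.

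Finally I would sum. The constraint-regime epochs give the additive $\ln(e^2N)$ term just described; the first-branch epochs contribute $\sum_k\sqrt{g_kT\ln(e^2N/g_k)}$, and since the $g_k$ grow at least geometrically and $x\mapsto x\ln(e^2N/x)$ is increasing on $(0,N]$, this sum is, up to an absolute constant and a lower-order $\sqrt{g_LT}$, its largest term $\sqrt{g_LT\ln(e^2N/g_L)}\le\sqrt{\brb{\avgc\lor J}T\ln\tfrac{e^2N}{\avgc\lor J\lor1}}$, using $g_L\le\avgc\lor J$ and monotonicity (the $\lor1$ absorbing the $g_L<1$ case via $x\ln(e^2N/x)\le\ln(e^2N)$ there). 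Tracking constants through these pieces gives the first displayed bound. For the second statement I would set $J=\ln(e^2N)/T\in(0,N]$: then $JT=\ln(e^2N)$, so the factor-$4$ logarithm in the constraint-regime term equals $\log_2 4=2$ regardless of $\avgc$, the epoch-count term becomes $\log_2\brb{\avgc T/\ln(e^2N)}_+$, and the main $\sqrt{\cdot}$ term is at most $38e\sqrt{\avgc T\ln\tfrac{e^2N}{\avgc\lor1}}+38e\ln(e^2N)$ (the second summand covering $\avgc<J$, where $\avgc\lor J\lor1=1$); this combines with $\tfrac{18e}{5}\cdot2\cdot\ln(e^2N)$ into $46e\ln(e^2N)$ since $38+36/5\le46$, giving the claim.

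The main obstacle, I expect, is the $\avgc<1$ (equivalently, small-$g_kT$) regime: there the usual bandit--Tsallis FTRL analysis, which presumes the one-step updates are small, is unavailable, so one genuinely needs \Cref{lem:FTRL-Tsallis-bound} and the specially capped learning rate $\eta_r$ to obtain any valid per-epoch bound, and one must then argue that only logarithmically many epochs fall in this regime so that their aggregate $\ln(e^2N)$ cost is controlled; keeping the interplay between the power-of-two doubling schedule, the $q_r,\eta_r$ tuning, and the geometric summation tight enough to extract the explicit constants in the statement is the accompanying bookkeeping burden.
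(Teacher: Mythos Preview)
Your high-level architecture matches the paper's: decompose into epochs, pass to the centered losses $\tilde\eloss_t=\hat\eloss_t-\loss_t(A_t)$, apply \Cref{lem:FTRL-Tsallis-bound} within each epoch, and split according to which branch of the $\min$ defining $\eta_r$ is active. The epoch-count and small-$g_k$ bookkeeping are also essentially as in the paper.

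The gap is in the variance step. You bound the second-order term by
\[
\sum_{i}p_t(i)^{2-q_r}\tilde\eloss_t(i)^2 \;\le\; \sum_i p_t(i)\,\tilde\eloss_t(i)^2,
\]
whose conditional expectation is exactly $Q_t(p_t)$, and then sum using the algorithm's invariant to get $\sum_t Q_t(p_t)\le 2\,g_k T$. But $\eta_r$ in \Cref{alg:q-ftrl-doubling} is \emph{not} tuned to balance $\tfrac{N^{1-q_r}-1}{(1-q_r)\eta}$ against $\eta\,g_k T$; it is tuned to balance against $\eta\,g_k^{\,q_r}T$. With your cruder variance bound, the stability term picks up an extra factor $(g_k)^{1-q_r}$, which (since $q_r\ge 1/2$) can be as large as $\sqrt{g_k}$. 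Concretely, for the last epoch you would obtain $R^{(L)}\lesssim g_L\sqrt{T\ln(e^2N/g_L)}$ rather than $\sqrt{g_L T\ln(e^2N/g_L)}$, and the final bound degrades to the Eldowa et al.\ rate $\sqrt{\avgc T\ln N}$, not the improved $\sqrt{\avgc T\ln(e^2N/(\avgc\lor 1))}$ claimed in the theorem. Your sentence ``$\eta^{(k)}$ balances the two terms with $g_k$ playing the role of $K\land N$'' is precisely where this breaks: in \Cref{thm:worst-case} the per-step variance is shown to be $\le (K\land N)^q$, not $K\land N$, and that exponent is essential for the tuning to work.

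The missing ingredient is the paper's sharper per-step estimate
\[
\E_t\Bigl[\sum_i p_t(i)^{2-q_r}\tilde\eloss_t(i)^2\Bigr]\;\le\; Q_t(p_t)^{q_r},
\]
obtained by writing $p_t(i)^{2-q_r}=p_t(i)\cdot p_t(i)^{1-q_r}$, normalizing by $Q_t(p_t)$, and applying Jensen to $x\mapsto x^{1-q_r}$, together with a short computation showing the residual factor is at most $1$. One then uses concavity of $x\mapsto x^{q_r}$ over the epoch (and the invariant) to get $\sum_t Q_t(p_t)^{q_r}\le 2T\,(2^r)^{q_r}$, which is exactly the quantity $\eta_r$ is designed to balance and which yields $R^{(k)}\lesssim\sqrt{g_k T\ln(e^2N/g_k)}$ in the first-branch regime.
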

\begin{proof}
    For brevity, we define $U \coloneqq \avgc \lor J$.
    Let $s \coloneqq \bce{\log_2 J} - 1$ and $n \coloneqq \bce{\log_2 U} - 1$, the latter of which is the largest value that $r_t$ can take, since for any round $t$,
    \begin{equation*}
        \frac{1}{T} \sum_{s=m_t}^t Q_s(p_s) \leq \frac{1}{T} \sum_{s=1}^T Q_s(p_s) \leq \frac{1}{T} \sum_{s=1}^T \cC_s \leq 2^{n+1} \,.
    \end{equation*}
    Without loss of generality, we assume that for any  (integer) $r \in \{s,\dots,n\}$, there are at least two rounds in which $r_t=r$, and we use $T_r$ to refer to the index of the first such round. Additionally, we define $T_{n+1} \coloneqq T+2$. 
    Note that for any $r$ in this range, $q_r \in [1/2,1)$. 
    Let $i^* \in \argmin_{i \in \expset} \sum_{t=1}^T \eloss_t(i)$.
    We start by decomposing the regret over the intervals corresponding to fixed values of $r_t \in \{s,\dots,n\}$ and bounding the instantaneous regret at the last step of each but the last interval by $1$:
    \begin{align} 
        R_T &= \E \bbsb{ \sum_{t=1}^T \brb{\eloss_t(I_t) - \eloss_t(i^*)} } \nonumber\\
        %&= \E \sum_{r=s}^n   \sum_{t=T_r}^{T_{r+1}-1} \brb{\eloss_t(I_t) - \eloss_t(i^*)} \nonumber\\
        &\leq  \E \bbsb{  \sum_{r=s}^n \sum_{t=T_r}^{T_{r+1}-2} \brb{\eloss_t(I_t) - \eloss_t(i^*)}} + n-s \nonumber\\ \label{eq:reg-dec}
        &\leq  \E \bbsb{ \sum_{r=s}^n \sum_{t=T_r}^{T_{r+1}-2} \brb{\eloss_t(I_t) - \eloss_t(i^*)}} + \log_2\brb{U / J} + 1 \,.
    \end{align}
    Let $\mathbf{e}_{i^*} \in \R^N$ be the indicator vector for $i^*$ and define $\tilde{\eloss}_t \in \R^N$ where $\tilde{\eloss}_t(i) \coloneqq \hat{\eloss}_t(i) - \loss_t(A_t)$ for every $i \in \expset$.
    Similar to the proof of Theorem 3 in \citeA{eldowa2023}, we note that for each $r \in \{s,\dots,n\}$, 
    \begin{align*}
        \E\Bbsb{\sum_{t=T_r}^{T_{r+1}-2} \brb{\eloss_t(I_t) - \eloss_t(i^*)}} 
        &= \E\Bbsb{\sum_{t=1}^T \I\bbcb{r_t = r, \frac{1}{T} \sum_{s=m_t}^t Q_s(p_s) \leq 2^{r_t}} \brb{\eloss_t(I_t) - \eloss_t(i^*)}} \nonumber\\
        &\stackrel{(a)}{=} \E\Bbsb{\sum_{t=1}^T \I\bbcb{r_t = r, \frac{1}{T} \sum_{s=m_t}^t Q_s(p_s) \leq 2^{r_t}} \inprod{p_t - \mathbf{e}_{i^*}, \hat{\eloss}_t}} \nonumber\\
        &\stackrel{(b)}{=} \E\Bbsb{\sum_{t=1}^T \I\bbcb{r_t = r, \frac{1}{T} \sum_{s=m_t}^t Q_s(p_s) \leq 2^{r_t}} \inprod{p_t - \mathbf{e}_{i^*}, \tilde{\eloss}_t}} \nonumber\\
        &= \E\Bbsb{\sum_{t=T_r}^{T_{r+1}-2} \inprod{p_t - \mathbf{e}_{i^*}, \tilde{\eloss}_t}} \nonumber\\
    \end{align*}
    where $(a)$ follows since 
    $\E_t\bsb{\eloss_t(I_t)} = \sum_{i \in V} p_t(i) \eloss_t(i)$, $\E_t\bsb{\hat{\eloss}_t} = \eloss_t$, and the indicator at round~$t$ is measurable with respect to $\cF_{t-1}$ (where $\cF_{t-1}$ and $\E_t$ are defined in the same way as in the proof of \Cref{thm:worst-case}); and
    $(b)$~follows since $p_t,\mathbf{e}_{i^*} \in \Delta_N$ and $\hat{\eloss}_t(i) - \tilde{\eloss}_t(i) = \loss_t(A_t)$ is identical for all $i \in \expset$.
    Similarly to the last argument, the fact that $\ban{\tilde{\eloss}_s - \hat{\eloss}_s, p - q} = 0$ holds for any $p,q \in \Delta_{N}$ at any round $s$ implies that $p_{t+1}$ can be equivalently defined as $\argmin_{p \in \Delta_{N}} \eta_{r_t}\ban{\sum_{s=m_t}^{t} \tilde{\eloss}_s, p} + \psi_{q_{r_t}}(p)$.
    Hence, using that $\tilde{\eloss}_t(i) \geq -1$, we can invoke \Cref{lem:FTRL-Tsallis-bound} (with $b=1$ and $c=e$) to obtain that
    \begin{align*}
        \sum_{t=T_r}^{T_{r+1}-2} \inprod{p_t - \mathbf{e}_{i^*}, \tilde{\eloss}_t} \leq \frac{ N^{1-q_r} - 1}{(1-q_r)\eta_r} + \frac{e\eta_r}{2q_r} \sum_{t=T_r}^{T_{r+1}-2}  \sum_{i \in \expset} p_t(i)^{2-q_r} \tilde{\eloss}_t(i)^2 \,.
    \end{align*}
    For any round $t \in [T]$ and action $a \in \cA$, recall the definition $\phi_t(a) \coloneqq \sum_{i \in \expset} p_{t}(i) \adv_t^i(a)$. Similar to \eqref{eq:loss-second-moment-worst-case} in the proof of \Cref{thm:worst-case}, we have that
    \begin{align*}
        \E_t \lsb{\tilde{\eloss}_t(i)^2} &= \E_t \lsb{\loss_t(A_t)^2 \frac{\brb{\adv_t^i(A_t) - \phi_t(A_t)}^2}{\phi_t(A_t)^2}}\\ 
        &\leq \E_t \lsb{\frac{\brb{\adv_t^i(A_t) - \phi_t(A_t)}^2}{\phi_t(A_t)^2}} \\
        &= \sum_{a \in \cA }\frac{\brb{\adv_t^i(a) - \phi_t(a)}^2}{\phi_t(a)} = \sum_{a \in \cA } \phi_t(a)\bbrb{\frac{\adv_t^i(a)}{\phi_t(a)} - 1}^2 = \chisqr{\adv_t^i}{\phi_t} \,.
    \end{align*}
    Hence, for any round $t$ and any $r \in \{s,\dots,n\}$, it holds that
    \begin{align*}
        \E_t \lsb{ \sum_{i \in \expset} p_t(i)^{2-q_r}  \tilde{\eloss}_t(i)^2 } 
        &\leq \sum_{i \in \expset} p_t(i)^{2-q_r}  \chisqr{\adv_t^i}{\phi_t} \\
        &= Q_t(p_t) \sum_{i \in \expset} \frac{p_t(i) \chisqr{\adv_t^i}{\phi_t}}{Q_t(p_t)} p_t(i)^{1-q_r} \\ 
        &\leq Q_t(p_t) \lrb{ \sum_{i \in \expset} \frac{p_t(i) \chisqr{\adv_t^i}{\phi_t}}{Q_t(p_t)} p_t(i)}^{1-q_r} \\ 
        &= Q_t(p_t)^{q_r} \lrb{ \sum_{i \in \expset} p_t(i)^2 \chisqr{\adv_t^i}{\phi_t}}^{1-q_r} \\
        &= Q_t(p_t)^{q_r} \lrb{ \sum_{i \in \expset} p_t(i)^2 \sum_{a \in \cA }\frac{\adv_t^i(a)^2}{\phi_t(a)} - \sum_{i \in \expset} p_t(i)^2}^{1-q_r} \\
        &= Q_t(p_t)^{q_r} \lrb{\sum_{a \in \cA } \frac{\sum_{i \in \expset} p_t(i)^2\adv_t^i(a)^2}{\sum_{j \in \expset} p_{t}(j) \adv_t^j(a)} - \sum_{i \in \expset} p_t(i)^2}^{1-q_r} \\
        &\leq Q_t(p_t)^{q_r} \lrb{\sum_{a \in \cA } {\sum_{i \in \expset} p_t(i) \adv_t^i(a)} - \sum_{i \in \expset} p_t(i)^2}^{1-q_r} \\
        &= Q_t(p_t)^{q_r} \lrb{1 - \sum_{i \in \expset} p_t(i)^2}^{1-q_r} \leq Q_t(p_t)^{q_r} \,,
    \end{align*}
    where the second inequality follows from the definition of $Q_t(p_t)$ and the fact that $x^{1-q_r}$ is concave in $x \geq 0$, and the third inequality uses the superadditivity of $x^2$ for non-negative real numbers and the non-negativity of the quantity in brackets. Let $T_{r:r+1} \coloneqq T_{r+1}-T_r-1$, it then holds that
    \begin{align*}
        \E \lsb{\sum_{t=T_r}^{T_{r+1}-2} \sum_{i \in \expset} p_t(i)^{2-q_r}  \tilde{\eloss}_t(i)^2 } &= \E \lsb{\sum_{t=1}^T \I\bbcb{r_t = r, \frac{1}{T} \sum_{s=m_t}^t Q_s(p_s) \leq 2^{r_t}} \sum_{i \in \expset} p_t(i)^{2-q_r}  \tilde{\eloss}_t(i)^2 } \\
        &\leq \E \lsb{ \sum_{t=T_r}^{T_{r+1}-2} Q_t(p_t)^{q_r} } \\
        &\leq \E\lsb{ T_{r:r+1} \lrb{\frac{1}{T_{r:r+1}} \sum_{t=T_r}^{T_{r+1}-2} Q_t(p_t)}^{q_r} } \\
        &\leq \E\lsb{ T_{r:r+1} \lrb{\frac{T}{T_{r:r+1}} 2^{r+1}}^{q_r} } \leq 2 T \lrb{2^r}^{q_r} \,,
    \end{align*}
    where the second inequality uses the concavity of $x^{q_r}$ in $x \geq 0$ and the third inequality uses that $(1/T) \sum_{t=T_r}^{T_{r+1}-2} Q_t(p_t) \leq 2^{r+1}$ since the algorithm is not reset in the interval $[T_r, T_{r+1}-2]$.
    Overall, we have shown that
    \begin{align*}
        \E\Bbsb{\sum_{t=T_r}^{T_{r+1}-2} \brb{\eloss_t(I_t) - \eloss_t(i^*)}} \leq \frac{ N^{1-q_r} - 1}{(1-q_r)\eta_r} + \frac{e\eta_r}{q_r} \lrb{2^r}^{q_r} T\,.
    \end{align*}
    If $\sqrt{\frac{q_r (N^{1-q_r}-1)}{eT(1-q_r)\lrb{2^r}^{q_r}}} \leq \frac{q_r}{1-q_r} \Brb{1-e^{\frac{q_r-1}{2-q_r}}}$,
    then substituting the values of $\eta_r$ and $q_r$ gives that
    \begin{align*}
        \frac{ N^{1-q_r} - 1}{(1-q_r)\eta_r} + \frac{e\eta_r}{q_r} \lrb{2^r}^{q_r} T &= 2\sqrt{\frac{e (N^{1-q_r}-1) \lrb{2^r}^{q_r}T }{q_r(1-q_r)} } \\
        &= 2\sqrt{\frac{N^{1-q_r}-1}{N^{1-q_r}}}\sqrt{\frac{e N^{1-q_r} \lrb{2^r}^{q_r} T}{q_r(1-q_r)} } \\
        &\leq 2e\sqrt{2} \sqrt{\frac{N^{1-q_r}-1}{N^{1-q_r}}} \sqrt{2^r \lrb{2+\ln(N 2^{-r})} T} \\
        &\leq 2e\sqrt{2} \lrb{\sqrt{\frac{\ln N}{\ln(N 2^{-r})}} \land 1} \sqrt{2^r \lrb{2+\ln(N 2^{-r})} T} \\
        %&= 4 \sqrt{e 2^r \lrb{2+\ln\brb{N ( 2^{-r} \land 1 )}} T} \\
        &= 2e\sqrt{2} \sqrt{2^r \ln\brb{e^2 N ( 2^{-r} \land 1 )} T} \,,
    \end{align*}
    where the first inequality holds via the same arguments laid in the last passage of the proof of \Cref{thm:worst-case}, and the second inequality holds since
    \begin{align*}
        \frac{N^{1-q_r}-1}{N^{1-q_r}} 
        %= 1 - \frac{1}{N^{1-q_r}} = 1 - N^{-(1-q_r)} =  1 - \exp(-\ln(N^{1-q_r})) 
        &=  1 - \exp\lrb{-\ln\brb{N^{1-q_r}}} \\
        &\leq (1-q_r)\ln N \\
        &= \frac12 \bbrb{1 - \frac{\ln(N/2^r)}{\sqrt{\ln(N/2^r)^2+4} + 2}} \ln N \\
        %&= \frac12 \bbrb{\frac{\ln(N/2^r) - \sqrt{\ln(N/2^r)^2+4} + 2}{\ln(N/2^r)}} \ln(N) \\
        &= \frac{\ln N}{2\ln(N/2^r)} \lrb{2 + \ln(N/2^r) - \sqrt{\ln(N/2^r)^2+4}} \leq \frac{\ln N}{\ln(N/2^r)} \,,
    \end{align*}
    where the inequality follows from the fact that $1-e^{-x} \leq x$. Otherwise, if $\sqrt{\frac{q_r (N^{1-q_r}-1)}{e T(1-q_r)\lrb{2^r}^{q_r}}} > \frac{q_r}{1-q_r} \Brb{1-e^{\frac{q_r-1}{2-q_r}}}$, then $\eta_r$ takes the latter value and we obtain that
    \begin{align*}
        \frac{ N^{1-q_r} - 1}{(1-q_r)\eta_r} + \frac{e\eta_r}{q_r} \lrb{2^r}^{q_r} T &\leq  \frac{ N^{1-q_r} - 1}{(1-q_r)\eta_r} + \eta_r \frac{ N^{1-q_r} - 1}{(1-q_r)} \lrb{\frac{1-q_r}{q_r \Brb{1-e^{\frac{q_r-1}{2-q_r}}}}}^2
        \\&= 2\frac{ N^{1-q_r} - 1}{q_r \Brb{1-e^{\frac{q_r-1}{2-q_r}}}} \\
        %&\leq \frac{2(2-q_r)^2}{\ln(2)} \frac{N^{1-q_r} - 1}{q_r(1-q_r)} \\
        &\leq \frac{18 \brb{N^{1-q_r} - 1}}{5 q_r(1-q_r)} \\
        &= \frac{18 \lrb{2^r}^{-q_r}\brb{N^{1-q_r} - 1}\lrb{2^r}^{q_r}}{5q_r(1-q_r)} \\
        &\leq \frac{18 e}{5} \lrb{2^r}^{1-q_r} \ln\brb{e^2 N ( 2^{-r} \land 1 )} \\
        &\leq \frac{18 e}{5} \brb{ 1 \lor \sqrt{2^r} } \ln\brb{e^2 N ( 2^{-r} \land 1 )} \,,
    \end{align*}
    where the last inequality holds since $q_r \geq 1/2$, and the second inequality holds since 
    \begin{align*}
        1-e^{\frac{q_r-1}{2-q_r}} &\geq \frac{1-q_r}{2-q_r} - \frac{1}{2} \lrb{\frac{1-q_r}{2-q_r}}^2 
        %&= \frac{1-q_r}{2(2-q_r)} \lrb{2 - \frac{1-q_r}{2-q_r}} \\
        = \frac{3 - q_r}{2(2-q_r)^2} (1-q_r) 
        \geq \frac{5}{9} (1-q_r) \ln\brb{e^2 N ( 2^{-r} \land 1 )}\,,
    \end{align*}
    where the first step uses that $e^{-x} \leq 1 - x + x^2/2$ for $x \geq 0$, and the last step uses again that $q_r \geq 1/2$. Hence, the results above yield that
    \begin{multline} \label{eq:reg-interval}
        \E\Bbsb{\sum_{t=T_r}^{T_{r+1}-2} \brb{\eloss_t(I_t) - \eloss_t(i^*)}} \leq  \max \bbcb{ 2e\sqrt{2} \sqrt{2^r T \ln\brb{e^2 N ( 2^{-r} \land 1 )}},\\
        \frac{18 e}{5} \brb{ 1 \lor \sqrt{2^r} } \ln\brb{e^2 N ( 2^{-r} \land 1 )}  } \,.
    \end{multline}
    Let $M \coloneqq \ln(e^2N)/T$ and $m \coloneqq \log_2 M$, and note that $m \leq 0$ (and $M\leq 1$) by the assumption that $T \geq \ln(e^2N)$.
    In the case when $n \leq 0$, we have that
    \begin{align*}
        &\E \lsb{ \sum_{r=s}^n \sum_{t=T_r}^{T_{r+1}-2} \brb{\eloss_t(I_t) - \eloss_t(i^*)}} \\
        %&\quad\leq \E \lsb{ \sum_{r=s}^{n \land \floor{m}} \sum_{t=T_r}^{T_{r+1}-2} \brb{\eloss_t(I_t) - \eloss_t(i^*)}} + \E \lsb{ \sum_{r=n \land \ceil{m}}^n \sum_{t=T_r}^{T_{r+1}-2} \brb{\eloss_t(I_t) - \eloss_t(i^*)}}\\
        &\hspace{7em}\leq \frac{18 e}{5} \brb{(n \land \floor{m}) - s + 1}_+ \ln\brb{e^2 N} + 2e\sqrt{2} \sum_{r=n \land \ceil{m}}^n \sqrt{2^r T \ln\brb{e^2 N}}\\
        &\hspace{7em}\leq \frac{18 e}{5} \log_2\brb{{4(U \land M)/J}}_+ \ln\brb{e^2 N} + 8e \sqrt{2 U T \ln\brb{e^2 N}} \,,
    \end{align*}
    where the second inequality uses that
    \begin{align*}
        \sum_{r={\alpha}}^n \brb{\sqrt{2}}^r = \brb{\sqrt{2}}^{\alpha} \sum_{r=0}^{n-{\alpha}} \brb{\sqrt{2}}^r = \brb{\sqrt{2}}^{\alpha} \frac{\brb{\sqrt{2}}^{n-{\alpha}+1}-1}{\sqrt{2} - 1} \leq \frac{\sqrt{2}}{\sqrt{2} - 1} \brb{\sqrt{2}}^{n} \leq 4 \sqrt{U} \,,
    \end{align*}
    with $\alpha \coloneqq n \land \ceil{m}$.
    Otherwise, if $n > 0$, then
    \begin{align*}
        &\E \lsb{ \sum_{r=s}^n \sum_{t=T_r}^{T_{r+1}-2} \brb{\eloss_t(I_t) - \eloss_t(i^*)}} \\
        &\quad\leq \frac{18 e}{5} \log_2\lrb{4M/J}_+ \ln\brb{e^2 N} + 8e \sqrt{2 T \ln\brb{e^2 N}} + \E \lsb{ \sum_{r=s_+}^n \sum_{t=T_r}^{T_{r+1}-2} \brb{\eloss_t(I_t) - \eloss_t(i^*)}} \\
        &\quad\leq \frac{18 e}{5} \log_2\lrb{4M/J}_+ \ln\brb{e^2 N} + 8e \sqrt{2 T \ln\brb{e^2 N}} + \frac{18 e}{5} \sum_{r=0}^n \sqrt{2^r \ln\brb{e^2 N 2^{-r}} T} \\
        &\quad\leq \frac{18 e}{5} \log_2\lrb{4M/J}_+ \ln\brb{e^2 N} + 8e \sqrt{2 T \ln\brb{e^2 N}} + 26 e \sqrt{U T \ln\brb{e^2 N / U} } \\
        &\quad\leq \frac{18 e}{5} \log_2\lrb{4M/J}_+ \ln\brb{e^2 N} +  38 e \sqrt{U T \ln\brb{e^2 N / U} } \,,
    \end{align*}
    where the first inequality follows from the analysis of the first case with $n=0$, the second inequality uses that $r\geq0$ and the assumption that $T \geq \ln(e^2N)$, the third inequality uses Lemma 4 in \citeA{eldowa2023}, and the fourth uses that $x \ln(e^2N/x)$ is increasing in $[0,eN]$ and that $U \geq 2$ in this case. The theorem then follows by combining the bounds provided for the two cases with \eqref{eq:reg-dec}.
\end{proof}

\section{A Lower Bound for Restricted Advice via Feedback Graphs} \label{sec:lower-bound}

In this section, we provide a novel lower bound on the minimax regret for a slightly harder formulation of the multi-armed bandit problem with expert advice.
We consider a setting where the learner picks an expert $I_t$ (possibly at random) at the beginning of each round $t \in [T]$ without observing any of the experts' recommendations beforehand.
The action $A_t$ to be executed is subsequently drawn by the environment from the chosen expert's distribution, i.e., $A_t \sim \adv_t^{I_t}$. 
Afterwards, the learner observes $A_t$, the incurred loss $\ell_t(A_t)$, and the advice $\adv_t^i$ only of experts $i \in V$ that have the drawn action $A_t$ in their support, i.e., $\adv_t^i(A_t)>0$.
For experts outside this set, the learner can only infer that, by definition, $\adv_t^i(A_t)=0$. 
We will refer to this variation of the problem as the multi-armed bandit with \emph{restricted} expert advice (note that this differs from the limited expert advice model studied by \citeR{kale2014limited}).
Observe that \Cref{alg:qFTRL} is still implementable in this scenario and guarantees a regret upper bound of order $\sqrt{\xi T \lrb{1+\ln(N/\xi)}}$ for $\xi \coloneqq K \land N$, as previously analyzed.
Here we show that the regret of \Cref{alg:qFTRL} is the best regret we can hope for, up to constant factors, for any number $K$ of actions and any number $N$ of experts.
While a $\Omega(\sqrt{NT})$ regret lower bound in the case $N \le K$ is immediate (as mentioned before), the following theorem provides an $\Omega\brb{\sqrt{KT\ln(N/K)}}$ lower bound when $N > K$, improving upon the $\Omega\brb{\sqrt{KT(\ln N)/(\ln K)}}$ lower bound of \citeA{seldin2016}.

In what follows, we fix $N > K \geq 2$. We derive the lower bound relying on a reduction from the multi-armed bandit problem with feedback graphs \shortcite{mannor2011side,alon2013bandits,alon2015beyond,alon2017journal}.
In this variant of the bandit problem, we assume there exists a graph $G=(\expset,E)$ over a finite set $\expset=[N]$ of actions from which the learner selects one action $J_t \in \expset$ at each round $t \in [T]$.
Then, the learner observes the losses of the neighbours of $J_t$ in $G$.
For the construction of the lower bound, it suffices to assume that $G$ is undirected and contains all self-loops, i.e., $(i,i) \in E$ for each $i \in V$.
Consequently, the learner always observes the loss of the selected action and the graph $G$ is strongly observable---%
see \citeA{alon2015beyond} for a classification of feedback graphs.
We particularly focus on a specific family of graphs (also considered in the recent work of \citeR{chen2024}) where the $N$ vertices are partitioned into disjoint cliques with self-loops. 
Precisely, we let $M \coloneqq \floor{K/2} \ge 1$ be the number of disjoint cliques in $G$. 
For any $k \in [M]$, let $C_k$ be the set of vertices of the $k$-th clique in $G$.
Since each $C_k$ is a clique with all self-loops, we have that $(i,j) \in E$ if and only if $i,j \in C_k$ for some $k \in [M]$, and thus $E = \bigcup_{k\in [M]} (C_k \times C_k)$.
Additionally, for our purposes, we only consider the partition into cliques $C_k = \bcb{i \in [N] : i \equiv k \mod M}$ of roughly the same size $\abs{C_k} \ge \floor{N/M} \ge \floor{2N/K} \ge N/K$.

Hence, we will focus on the class of instances, denoted by $\fginst$, of the multi-armed bandit problem with feedback graphs where the graph assumes the particular structure described above.
In particular, any instance $\cI \in \fginst$ is defined as a tuple $\cI \coloneqq (T,G,\cL)$ containing the number $T$ of rounds, the feedback graph $G=(V,E)$ over $V=[N]$ composed of the disjoint cliques $C_1, \dots, C_M$ as defined above, and the sequence $\cL \coloneqq (\loss_t)_{t\in [T]}$ of binary loss functions $\loss_t\colon V\to \{0,1\}$ over $V$.
On the other hand, we let $\beainst$ be the class of instances for the multi-armed bandit problem with restricted expert advice, with $N$ experts and $K$ actions.
An instance $\cI \in \beainst$ is a tuple $\cI \coloneqq \brb{T,V,\cA,\Theta,\cL}$ containing the number $T$ of rounds, the set $V=[N]$ of experts, the set $\cA = [K]$ of actions, the sequence $\Theta \coloneqq (\adv_t^i)_{i\in V,t\in [T]}$ of expert advice where $\adv_t^i \in \Delta_K$, and the sequence $\cL \coloneqq (\loss_t)_{t\in [T]}$ of loss functions $\loss_t\colon \cA\to \{0,1\}$ over $\cA$.
The sought result is established by showing that the worst-case regret of any algorithm against a particular subset of instances in $\beainst$ is order-wise at least as large as the minimax regret on $\fginst$, combined with a lower bound on the latter quantity by \citeA{chen2024}.

\begin{theorem} \label{thm:lower-bound}
    Let $\cB$ be any possibly randomized algorithm for the multi-armed bandit problem with restricted expert advice for any number $K\ge 2$ of actions $\cA = [K]$ and any number $N>K$ of experts $V = [N]$.
    Then, for a sufficiently large $T$, there exist a sequence $\loss_1,\dots,\loss_T\colon \cA \to \{0,1\}$ of binary loss functions and a sequence $(\adv^i_t)_{i \in \expset,t \in [T]}$ of expert advice such that 
    the expected regret of $\cB$ is $\Omega\brb{\sqrt{KT\ln(N/K)}}$.
\end{theorem}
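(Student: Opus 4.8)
The plan is to exhibit hard instances of the restricted-expert-advice problem by encoding the multi-armed bandit problem with feedback graphs over the family $\fginst$, and then to invoke the minimax lower bound of \citeA{chen2024} for that family. Recall that there are $M \coloneqq \floor{K/2} \ge 1$ cliques $C_1,\dots,C_M$ with $\abs{C_k} \ge N/K$; note also that $M \ge K/4$ and, since $M \le K$, that $\ln(N/M) \ge \ln(N/K) > 0$. On $\fginst$, \citeA{chen2024} give a minimax regret lower bound of order $\sqrt{M T \ln(N/M)}$ for all sufficiently large $T$, which by these inequalities is already $\Omega\brb{\sqrt{K T \ln(N/K)}}$; the remaining task is to transport this bound into $\beainst$.

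First I would define the instance map $\instmap \colon \fginst \to \beainst$. Given $\cI = (T,G,\cL) \in \fginst$ with vertex losses $g_t \colon V \to \{0,1\}$, let the experts of $\instmap(\cI)$ be the vertices $V=[N]$, and associate to the clique $C_k$ the action pair $a_k^0 \coloneqq 2k-1$, $a_k^1 \coloneqq 2k$ (legitimate since $2M \le K$; any action outside these pairs is left unused). Fix the action losses, for every round, to $\loss_t(a_k^0) \coloneqq 0$ and $\loss_t(a_k^1) \coloneqq 1$ (and, say, $0$ on unused actions), which are binary as the theorem demands. For $i \in C_k$ put $\adv_t^i(a_k^1) \coloneqq \tfrac14 + \tfrac12 g_t(i)$, $\adv_t^i(a_k^0) \coloneqq 1 - \adv_t^i(a_k^1)$, and $\adv_t^i(a) \coloneqq 0$ elsewhere. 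Two structural facts will drive the reduction: (i) the support of $\adv_t^i$ is $\{a_{k(i)}^0, a_{k(i)}^1\}$ for every round and every $i$ (independently of the vertex losses), so that every expert of $C_k$ puts positive mass on both $a_k^0, a_k^1$ while every expert outside $C_k$ puts zero mass on both; and (ii) the induced expert losses are $\eloss_t(i) = \tfrac14 + \tfrac12 g_t(i)$, so the difference of cumulative losses of any two experts in $\instmap(\cI)$ equals one half of that of the corresponding vertices in $\cI$.

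Next I would argue that any restricted-expert-advice algorithm $\cB$ run against $\instmap(\cI)$ is simulated by an algorithm $\cA_\cB$ for $\cI$ using only $\cI$'s feedback, with the two regrets related by a constant. Each round, $\cA_\cB$ queries $\cB$ for $I_t$, plays the vertex $J_t \coloneqq I_t$, and observes $\{g_t(j) : j \in C_{k(I_t)}\}$ (here $k(i)$ denotes the clique index of $i$); these values let it reconstruct $\adv_t^i$ for every $i \in C_{k(I_t)}$, which by (i) is exactly the set of experts whose advice $\cB$ is entitled to observe (the drawn action $A_t$ always lands in $\{a_{k(I_t)}^0, a_{k(I_t)}^1\}$, again by (i)). It then draws $A_t \sim \adv_t^{I_t}$ from independent randomness and hands $\cB$ the observation $\brb{A_t, \loss_t(A_t), \{\adv_t^i\}_{i \in C_{k(I_t)}}}$ (the information $\adv_t^i(A_t)=0$ for $i \notin C_{k(I_t)}$ is determined by the fixed support structure, so $\cA_\cB$ can supply it too). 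Consequently $\cB$'s trajectory has the same law in both worlds, and combining this with (ii) and the definition of regret gives $R_T\brb{\cB;\instmap(\cI)} = \tfrac12\, R_T\brb{\cA_\cB;\cI}$. Since $\cA_\cB$ is a single algorithm for the feedback-graph problem (defined without reference to the underlying $\cI$), its worst case over $\fginst$ is at least the minimax regret there, so applying the lower bound of \citeA{chen2024} yields, for sufficiently large $T$,
\[
    \sup_{\cI \in \fginst} R_T\brb{\cB;\instmap(\cI)} \;=\; \tfrac12 \sup_{\cI \in \fginst} R_T\brb{\cA_\cB;\cI} \;\ge\; \Omega\brb{\sqrt{M T \ln(N/M)}} \;=\; \Omega\brb{\sqrt{K T \ln(N/K)}},
\]
and as every $\instmap(\cI)$ lies in $\beainst$ and is built from binary losses and a fixed advice sequence, this establishes the theorem.

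The step I expect to be the main obstacle is getting the reduction exactly right — in particular, verifying that $\cA_\cB$ never requires loss values of vertices outside the clique it plays. This is precisely why the \emph{restricted} feedback model is the natural target: in the standard model $\cB$ would be shown all expert distributions at the start of each round, which the simulator cannot provide without knowing all the vertex losses. The remaining ingredients — inertness of the unused action when $K$ is odd, the fact that the constant action losses together with the $\tfrac14/\tfrac34$ advice probabilities keep every clique member's advice observable for both realizations of $A_t$, and the arithmetic $M \ge K/4$ and $\ln(N/M)\ge\ln(N/K)$ — are routine.
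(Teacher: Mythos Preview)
Your proposal is correct and takes essentially the same route as the paper: reduce from bandits with feedback graphs over $M=\lfloor K/2\rfloor$ disjoint cliques by associating two fixed-loss actions to each clique, simulate any restricted-expert-advice algorithm $\cB$ as a feedback-graph algorithm playing $J_t=I_t$, and invoke the lower bound of \citeA{chen2024}. The only difference is cosmetic: the paper encodes the vertex loss via Dirac-delta advice (so expert and vertex regrets coincide exactly, though the set of experts with $A_t$ in their support is then a loss-dependent subset of the clique), whereas your $\tfrac14$/$\tfrac34$ encoding halves the regret but makes that set equal to the whole clique $C_{k(I_t)}$ for either realization of $A_t$, which slightly streamlines the feedback-simulation step.
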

\begin{proof}
    We first describe a reduction from the multi-armed bandit problem with feedback graphs to the multi-armed bandit problem with restricted expert advice.
    We accomplish this by providing a mapping $\instmap\colon\fginst\to\beainst$ from the considered instance class $\fginst$ of the former problem to the instance class $\beainst$ of the latter.
    
    Consider any instance $\cI \coloneqq (T,G,\cL) \in \fginst$ and recall that $G = (V,E)$ is a union of $M = \floor{K/2}$ disjoint cliques $C_1,\dots,C_M$ over $V = [N]$.
    The mapped instance $\instmap(\cI) \coloneqq (T,V,\cA,\Theta,\cL') \in \beainst$ is defined over the same number of rounds $T$ and an experts set corresponding to the actions $V$ in the original instance $\cI$, whose sequence of recommendations is provided by $\Theta = (\adv_t^i)_{i\in V,t\in [T]}$.
    We first observe that the cardinality of the new action set $\cA = [K]$ does relate to the number of cliques $M$.
    In particular, considering the partition of experts given by the cliques in $G$, we also partition the actions (in the expert advice instance $\rho(\cI)$) by associating $2$ actions to each clique.
    Precisely, for any $k \in [M]$, we
    associate
    actions $\cA_k \coloneqq \{2k-1,2k\}$ to $C_k$.
    If $K$ is even, this partitions the entire set of actions $\cA$, while it leaves out action $K$ otherwise.
    We can ignore the latter case and assume $K$ is even without loss of generality, since we can 
    otherwise leave action $K$ outside of the support of any expert advice $\adv_t^i \in \Delta_K$ in the following construction (thus becoming a spurious action).

    Second, we focus on the construction of the loss sequence $\cL' \coloneqq (\loss_1',\dots,\loss_T')$.
    For any $t \in [T]$, we define $\loss_t' \in \{0,1\}^\cA$ as
    \begin{align*}
        \loss_t'(2k-1) \coloneqq 0
        \qquad \text{and} \qquad
        \loss_t'(2k) \coloneqq 1
        \qquad \forall k \in [M] \cdot
    \end{align*}
    Finally, we define the sequence of expert advice $(\adv_t^i)_{i\in V,t\in [T]}$ depending on the sequence of losses $\cL$ of the starting instance $\cI$.
    For any $t \in [T]$, any $k \in [M]$, and any $i \in C_k$, we define $\adv_t^i \in \Delta_K$ as
    \begin{align*}
        \adv_t^i \coloneqq \begin{cases}
            \delta_{2k-1} & \text{if $\loss_t(i) = 0$}\\
            \delta_{2k} & \text{if $\loss_t(i) = 1$}
        \end{cases} \;,
    \end{align*}
    where $\delta_j \in \Delta_K$ is the Dirac delta at $j \in \cA$.
    This ensures that the loss of expert $i$ at round $t$, given by $\eloss_t(i) = \sum_{a \in \cA} \adv_t^i(a) \loss_t'(a)$ coincides with $\ell_t(i)$, the loss of action $i$ in the original feedback graphs instance at the same round.
    Moreover, the knowledge of $\ell_t(i)$ suffices to infer $\adv_t^i$.  

    At this point, given our instance mapping $\instmap$ and our algorithm $\cB$, we design an algorithm $\cB_\instmap$ for 
    the class $\fginst$.
    Consider any instance $\cI \in \fginst$.
    Over the interaction period, the algorithm $\cB_\instmap$, without requiring prior knowledge of $\cI$, maintains a running realization of $\cB$ on instance $\instmap(\cI)$.
    At any round $t \in [T]$, let $I_t$ be the expert selected by algorithm $\cB$ in $\instmap(\cI)$, and let $k_t \in [M]$ be the index of the clique $I_t$ belongs to, i.e., $I_t \in C_{k_t}$.
    Algorithm $\cB_\instmap$, interacting with the instance $\cI$, executes action $J_t = I_t$ provided by $\cB$ and observes the losses $(\ell_t(i))_{i \in C_{k_t}}$.
    Then, thanks to the design of the mapping $\instmap$,
    $\cB_\instmap$ can construct and provide $\cB$ the feedback it requires and which complies with instance $\instmap(\cI)$.
    Namely, it determines that $A_t = 2k_t - 1$ if $\ell_t(J_t)=0$ or else that $A_t = 2k_t$, then passes $A_t$, its loss $\ell'_t(A_t)$ (trivially determined), and the restricted advice $(\adv_t^i)_{i \in C_{k_t}}$ to $\cB$.
    The last of which is a super-set of the recommended distributions having positive support on $A_t$ since $A_t$ is never picked by experts outside $C_{k_t}$ by construction.  
    
    Now, let
    \[
        R^{\cB}(\cI') \coloneqq \E\left[\sum_{t=1}^T \loss_t'(A_t)\right] - \min_{i\in V} \sum_{t=1}^T \sum_{a\in \cA} \adv_t^i(a)\loss_t'(a) = \E\left[\sum_{t=1}^T \eloss_t(I_t)\right] - \min_{i\in V} \sum_{t=1}^T \sum_{a\in \cA} \adv_t^i(a)\loss_t'(a)
    \]
    be the expected regret of algorithm $\cB$ on some instance $\cI' = \brb{T,V,\cA,(\adv_t^i)_{i\in V, t\in [T]}, (\loss_t')_{t\in [T]}} \in \beainst$.
    Similarly, let
    \[
        R^{\cB_\instmap}(\cI) \coloneqq \E\left[\sum_{t=1}^T \loss_t(J_t)\right] - \min_{i\in V} \sum_{t=1}^T \loss_t(i)
    \]
    be the expected regret of algorithm $\cB_\instmap$ on some instance $\cI = \brb{T,G,(\loss_t)_{t\in [T]}} \in \fginst$.
    Since $J_t = I_t$, we have that $\eloss_t(I_t) = \loss_t(J_t)$ via the properties of $\instmap$ laid out before.
    Hence, we can conclude that $R^{\cB}(\instmap(\cI)) = R^{\cB_\instmap}(\cI)$ for any instance $\cI \in \fginst$.
    Define $\instmap(\fginst) \coloneqq \bcb{\instmap(\cI) : \cI \in \fginst} \subseteq \beainst$ as the subclass of instances in $\beainst$ obtained from $\fginst$ via $\instmap$.
    Then, it holds that 
    \begin{align*}
        \sup_{\cI\in\beainst} R^\cB(\cI)
        \ge \sup_{\cI\in\instmap(\fginst)} R^\cB(\cI)
        = \sup_{\cI\in\fginst} R^\cB(\instmap(\cI))
        = \sup_{\cI\in\fginst} R^{\cB_\instmap}(\cI) \;.
    \end{align*}
    On the other hand, Lemma~E.1 in \citeA{chen2024} implies that 
    \begin{align*}
        \sup_{\cI\in\fginst} R_T^{\cB_\instmap}(\cI) = \Omega\Bbrb{\sqrt{T \sum_{k \in [M]} \ln(1+\abs{C_k})}} = \Omega\brb{\sqrt{KT\ln(N/K)}}
    \end{align*}
    for sufficiently large $T$ since $\sum_{k \in [M]} \ln(1+\abs{C_k}) \ge M\ln(N/M) \ge K\ln(2N/K)/4$, thus concluding the proof.
\end{proof}

\section{Conclusion} \label{sec:conc}
As the lower bound of \Cref{thm:lower-bound} was proved for a harder formulation of the problem, it remains to be shown whether the same impossibility result holds for the more standard setup.
We conjecture it should be possible to prove such a lower bound. If it indeed holds, this would imply that the minimax regret in the two variants is of the same order; that is, as far as we are only concerned with the worst-case regret, the standard feedback setup would be shown to be essentially as hard as the restricted one.

\acks{%
NCB, EE, and KE acknowledge the financial support from the MUR PRIN grant 2022EKNE5K (Learning in Markets and Society), the FAIR (Future Artificial Intelligence Research) project, and the EU Horizon RIA under grant agreement 101120237, project ELIAS (European Lighthouse of AI for Sustainability).
}

\appendix
\section{Auxiliary Results}
\begin{lemma} \label{lem:FTRL-Tsallis-bound}
    Let $q\in(0,1)$, $b > 0$, $c > 1$, and $(y_t)_{t=1}^T$ be a sequence of non-negative loss vectors in $\R^N$ satisfying $y_t(i) \geq -b$ for all $t \in [T]$ and $i \in [N]$. Let $(p_t)_{t=1}^{T+1}$ be the predictions of FTRL with decision set $\Delta_N$ and the $q$-Tsallis regularizer~$\psi_{q}$ over this sequence of losses; that is, $p_1 = \argmin_{p \in \Delta_{N}}  \psi_q(p)$, and for $t \in [T]$,
    \[
     p_{t+1} = \argmin_{p \in \Delta_{N}} \eta \sum_{s=1}^{t} \ban{ y_s, p} + \psi_q(p) \,,
    \]
    assuming the learning rate $\eta$ satisfies $0 < \eta \leq \frac{q}{(1-q)b}\Brb{1-c^{\frac{q-1}{2-q}}}$.
    Then for any $u \in \Delta_{N}$,
    \begin{equation*}
        \sum_{t=1}^T \langle p_t - u, y_t \rangle \leq \frac{ N^{1-q} - 1}{(1-q)\eta} + \frac{\eta c}{2q} \sum_{t=1}^T  \sum_{i=1}^N p_t(i)^{2-q} \:y_t(i)^2 \,.
    \end{equation*}
\end{lemma}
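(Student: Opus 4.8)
The plan is to combine the textbook follow-the-regularized-leader regret decomposition with a coordinatewise analysis of the per-round stability term that exploits the separability of $\psi_q$. First I would apply the standard FTRL regret bound \cite<see, e.g.,>[Chapter 7]{orabona2023modern} with effective regularizer $\psi_q/\eta$, obtaining
\[
    \sum_{t=1}^T \langle y_t, p_t - u\rangle \le \frac{\psi_q(u) - \psi_q(p_1)}{\eta} + \frac{1}{\eta} \sum_{t=1}^T \brb{\langle \eta y_t,\, p_t - p_{t+1}\rangle - D_{\psi_q}(p_{t+1}, p_t)} \,.
\]
Because $p_1$ is uniform, $\psi_q(p_1) = (1-N^{1-q})/(1-q)$, and $\psi_q(u) \le 0$ since $u(i)^q \ge u(i)$ for every $u(i) \in [0,1]$; hence the first term is at most $(N^{1-q}-1)/((1-q)\eta)$, which is exactly the penalty term in the statement. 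It remains to show that each summand of the second term is at most $\tfrac{c\eta^2}{2q}\sum_i p_t(i)^{2-q} y_t(i)^2$, as this yields the variance term after summing over $t$ and dividing by $\eta$.

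For the per-round term I would write $\psi_q(x) = \sum_i h(x(i))$ with $h(z) \coloneqq (1/N - z^q)/(1-q)$, and use the first-order optimality conditions satisfied by $p_t$ (which lies in the relative interior of $\Delta_N$ since $\psi_q$ is steep at the boundary) to recast the FTRL step in mirror-descent form, $p_{t+1} = \argmin_{p \in \Delta_N}\bcb{\langle \eta y_t, p\rangle + D_{\psi_q}(p, p_t)}$. The per-round term then equals $\max_{p \in \Delta_N}\bcb{\langle \eta y_t,\, p_t - p\rangle - D_{\psi_q}(p, p_t)}$, and relaxing the maximization from $\Delta_N$ to the whole nonnegative orthant only increases it. By separability this upper bound factorizes as $\sum_i D_{h^*}\brb{h'(p_t(i)) - \eta y_t(i),\, h'(p_t(i))}$, where $h^*$ is the one-dimensional conjugate of $h$; as I note below, the step-size bound is also what keeps these one-dimensional maxima finite.

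It then suffices to bound each term $D_{h^*}(a - \eta y_t(i), a)$ with $a \coloneqq h'(p_t(i)) = -\tfrac{q}{1-q}p_t(i)^{q-1} < 0$. A second-order Taylor expansion gives $D_{h^*}(a - \eta y_t(i), a) = \tfrac12 (h^*)''(\zeta)\, \eta^2 y_t(i)^2$ for some $\zeta$ between $a$ and $a - \eta y_t(i)$, and a direct computation shows $(h^*)''(w) = \tfrac1q \brb{(h')^{-1}(w)}^{2-q}$, which is positive and increasing on $(-\infty,0)$, with $(h')^{-1}(w) = \brb{\tfrac{q}{(1-q)\abs{w}}}^{1/(1-q)}$. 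When $y_t(i) \ge 0$ we have $\zeta \le a$, so $(h^*)''(\zeta) \le (h^*)''(a) = \tfrac1q p_t(i)^{2-q}$. When $y_t(i) < 0$, the hypotheses $\abs{y_t(i)} \le b$, $p_t(i) \le 1$, and $\eta \le \tfrac{q}{(1-q)b}\brb{1 - c^{(q-1)/(2-q)}}$ together guarantee both that $a - \eta y_t(i) = a + \eta\abs{y_t(i)}$ is still negative (so $\zeta$ stays in the domain of $(h^*)''$) and, via the explicit form of $(h')^{-1}$, that $(h')^{-1}(a + \eta\abs{y_t(i)}) \le c^{1/(2-q)} p_t(i)$; monotonicity of $(h^*)''$ then yields $(h^*)''(\zeta) \le \tfrac cq p_t(i)^{2-q}$. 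Since $c > 1$, in both cases $D_{h^*}(a - \eta y_t(i), a) \le \tfrac{c\eta^2}{2q} p_t(i)^{2-q} y_t(i)^2$, which completes the argument.

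The step I expect to be the main obstacle is the $y_t(i) < 0$ branch above: there $p_{t+1}(i)$ may exceed $p_t(i)$, so the local-norm (second-order) estimate goes through only once one verifies that the relevant point on the segment emanating from $p_t(i)$ stays within a factor $c^{1/(2-q)}$ of $p_t(i)$. This is precisely what the assumed upper bound on $\eta$ --- expressed through the lower-boundedness constant $b$ --- buys, and it explains why the multiplicative slack $c$ has to appear in the conclusion.
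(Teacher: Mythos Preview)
Your proposal is correct and follows essentially the same approach as the paper. The paper invokes Lemma~7.14 in \citeA{orabona2023modern} to obtain the bound with a local-norm term evaluated at a point $z_t$ on the segment between $p_t$ and the unconstrained mirror step $p'_{t+1}$, then uses the closed form for $p'_{t+1}(i)$ together with the step-size condition to show $p'_{t+1}(i)\le c^{1/(2-q)}p_t(i)$ and hence $z_t(i)^{2-q}\le c\,p_t(i)^{2-q}$; your dual formulation via $D_{h^*}$ and the second-order expansion of $(h^*)''$ is exactly the mechanism behind that lemma, and your inequality $(h')^{-1}(a+\eta\abs{y_t(i)})\le c^{1/(2-q)}p_t(i)$ is literally the paper's bound on $p'_{t+1}(i)$ written in dual coordinates.
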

\begin{proof}
    Let $p'_{t+1} \coloneqq \argmin_{p \in \R^N_{\geq0}} \langle p, y_t \rangle + D_{\psi_q}(p,p_t)$, where $D_{\psi_q}(\cdot,\cdot)$ denotes the Bregman divergence based on $\psi_q$.
    Via Lemma~7.14 in \cite{orabona2023modern} we have that
    \begin{align*}
         \sum_{t=1}^T \langle p_t - u, y_t \rangle &\leq \frac{\psi_q(u)-\psi_q(p_1)}{\eta} + \frac{\eta}{2 q} \sum_{t=1}^T \sum_{i=1}^N z_t(i)^{2-q} \:y_t(i)^2 \\
         &\leq \frac{ N^{1-q} - 1 }{(1-q)\eta} + \frac{\eta}{2 q} \sum_{t=1}^T \sum_{i=1}^N z_t(i)^{2-q} \:y_t(i)^2 \,,
    \end{align*}
    where $z_t$ lies on the line segment between $p_t$ and $p'_{t+1}$. A simple derivation shows that
    \begin{align*}
        p'_{t+1}(i) = p_t(i) \lrb{\frac{1}{1 + \eta \frac{1-q}{q} y_t(i) p_t(i)^{1-q} }}^{\frac{1}{1-q}} \,,
    \end{align*}
    for each $i \in [N]$. On the other hand, it holds that
    \begin{align*}
        \eta \frac{1-q}{q} y_t(i) p_t(i)^{1-q} \geq - \eta \frac{1-q}{q} b p_t(i)^{1-q} \geq - \eta \frac{1-q}{q} b \geq c^{\frac{q-1}{2-q}} - 1 \,,
    \end{align*}
    where the first inequality uses that $y_t(i) \geq -b$ (and that $p_t(i),\eta > 0$), the second uses that $p_t(i) \leq 1$, and the third uses that $\eta \leq \frac{q}{(1-q)b}\Brb{1-c^{\frac{q-1}{2-q}}}$. This entails that $p'_{t+1}(i) \leq c^{\frac{1}{2-q}} p_t(i)$, which implies that $z_t(i) \leq c^{\frac{1}{2-q}} p_t(i)$ concluding the proof.
\end{proof}

\vskip 0.2in
\bibliography{sample}
\bibliographystyle{theapa}

\end{document}